\documentclass[runningheads]{llncs}
\usepackage[T1]{fontenc}
\usepackage{graphicx}
\usepackage{amsmath,amssymb}
\usepackage{xcolor}
\usepackage{tikz}
\usepackage{subcaption}
\usepackage{algorithm}
\usepackage{algpseudocode}
\usepackage{gensymb} 
\usepackage[bookmarks=true]{hyperref}

\usetikzlibrary{arrows.meta}

\definecolor{myPurple}{RGB}{160,32,240}
\definecolor{mygrey}{RGB}{169,169,169}
\definecolor{darkgreen}{RGB}{0,100,0}
\definecolor{dark-red}{rgb}{0.4,0.15,0.15}
\definecolor{dark-blue}{rgb}{0.15,0.15,0.6}
\definecolor{medium-blue}{rgb}{0,0,0.5}
\definecolor{ForestGreen}{RGB}{34,139,34}

\newcommand{\R}{\mathbb{R}}

\newcommand{\na}{{\scriptsize N/A}}

\spnewtheorem{requirement}{Requirement}{\bfseries}{\itshape}
\spnewtheorem{assumption}{Assumption}{\bfseries}{\itshape}
\spnewtheorem{notation}{Notation}{\bfseries}{\itshape}

\renewenvironment{proof}[1][Proof]{\par\noindent\textit{#1.}\enspace}{\qed\par}

\long\def\invis#1{}

\begin{document}
\title{On the Control of Mobile Ad-Hoc Agent Deployments in Partially Observed Space}
%
\titlerunning{Control of Mobile Ad-Hoc Agent Deployments}
%
\author{Edwin Meriaux\inst{1,3}$^{,\star}$ \and
Louis-Roy Langevin\inst{2}$^{,\star}$ \and
Shuo Wen\inst{1} \and
Ndiam\'e Ndiaye\inst{2} \and
Gregory Dudek\inst{1} \and
Antonio Lor\'{\i}a\inst{3}}
\authorrunning{E. Meriaux, L.-R. Langevin et al.}
\institute{Computer Science Department and Mila, McGill University, Montreal, Canada \and
Department of Mathematics and Statistics, McGill University, Montreal, Canada \and
L2S, CNRS--CentraleSup\'elec, Universit\'e Paris-Saclay, Gif-sur-Yvette, France\\[4pt]
$^{\star}$~Edwin Meriaux and Louis-Roy Langevin are co-first authors.}
\maketitle              
{
  \renewcommand{\thefootnote}{}%
  
  \addtocounter{footnote}{-1}%
}
\begin{abstract}
We study the online deployment of mobile ad hoc networks in unknown orthogonal
environments, formalized as the Partially Observable Cooperative Guard Art
Gallery Problem. We give a full proof that CADENCE algorithms achieve full coverage while maintaining a
connected visibility graph using at most $\tfrac{n}{2}+h-2$ agents in orthogonal worlds
with $n$ corners and $h$ holes. We further evaluate deployment-order heuristics that reduce agent count and deployment time in practice.

\keywords{Mobile ad hoc networks \and Partially Observable Cooperative Guard Art Gallery Problem \and Multi-agent deployment.}
\end{abstract}
\section{Introduction}Mobile ad hoc networks~\cite{anjum2015survey} (MANETs) are teams of mobile
agents that establish communication and sensing coverage without
pre-existing infrastructure or knowledge of the environment. They arise in
indoor, underground, and confined spaces such as disaster-stricken buildings,
tunnels, and mines, where communication infrastructure is unavailable,
unreliable, or prohibitively expensive to
deploy~\cite{anjum2015survey,kulla2015real,del2016survey,8991194,zhang2023unmanned},
and where obstacles and irregular geometry force agents to organize under
limited range and line-of-sight conditions~\cite{nirmaladevi2022augmented}. 
The combinatorial core of such deployments, covering an environment with
line-of-sight agents, is the \emph{Cooperative Guard Art Gallery Problem}~\cite{zylinski2004cooperative}. When
the environment is discovered online rather than known in advance, it becomes
the \emph{Partially Observable Cooperative Guard Art Gallery Problem} (POCGAGP),
introduced in~\cite{meriaux2025mobile}: agents enter an unknown environment,
expand coverage monotonically, and preserve connectivity throughout.

Rather than a single algorithm, we present \emph{World Coverage Algorithms} (WCA), which for a given world $W$ (see Definition~\ref{def:world}), solves the POCGAGP (see Requirements~\ref{req:conn},\ref{req:cov},\ref{req:budget},\ref{req:time}) with its requirements of constant connectivity between the deployed agents, monotonically increasing coverage, and limiting its usage of resources (the number of agents and time to completion). This represents the deployment of a MANET to form a connected network over the whole space. WCA
deploy agents onto points visible to the current network and may relocate them
to adjacent visible points; both preserve connectivity, so on completion the
agents cover $W$ as one connected network. Members differ in their rules, and
may reach different covering configurations.

A key subclass of this algorithm class is the CADENCE (Centralized
Agent Directed Exploration for Network Coverage)~\cite{meriaux2025mobile} Algorithm which
deploys agents to \emph{valid $270^\circ$ corners} of an
orthogonal world, yielding the worst-case bound $\tfrac{n}{2}+h-2$ for $n$
corners and $h$ holes, matching the best known bound for the Cooperative Guard Art Gallery
Problem~\cite{zylinski2004cooperative} under partial observability.

We make three contributions. First, we introduce the class of World Coverage Algorithms, an exact classification of the algorithms that solve the POCGAGP. Second, we present a full proof of the $\tfrac{n}{2}+h-2$ bound
for CADENCE, achieved despite the partial observability constraint (formalizing a proof already sketched in a previous work~\cite{meriaux2025mobile}). Third, we report an empirical study of over $10{,}000$ tests
comparing members of CADENCE with different deployment-order heuristics.
\section{Mathematical Background}
\label{math}

\begin{definition}[Closure, Border, Interior]
    Given a set $S \subseteq \R^2$, the \textbf{closure} of $S$ is defined as 
    $\mathrm{cl}(S) := \{x \in \R^2 \,:\, \exists (x_i)_{i \ge 1} \text{ with } x_i \in S \text{ and } x_i \to x\}$, 
    the \textbf{border} of $S$ as $\partial S := \mathrm{cl}(S) \cap \mathrm{cl}(S^c)$, and 
    the \textbf{interior} of $S$ as $\mathrm{int}(S) := S\setminus\partial S$.
\end{definition}

\begin{definition}[Polygon, Edge, Interior, Corner, Orthogonal, Filled]
    A \textbf{polygon} with $n \ge 3$ distinct \textbf{corners} $c_1,\ldots,c_n \in \R^2$ consists of the union of the line segments $\overline{c_i c_{i+1}}$ for each $1 \le i \le n-1$ and $\overline{c_n c_{1}}$. These line segments are called the \textbf{edges} of the polygon and two edges can only intersect on a common endpoint (i.e. on a common corner). The set of points of an edge excluding its endpoints is called the \textbf{interior} of this edge. A polygon is called \textbf{orthogonal} if all its edges are horizontal or vertical with respect to the Cartesian axes.  A \textbf{filled polygon} is the bounded region formed by a polygon, including the border.
\end{definition}

\begin{remark}
    Note that every filled polygon has positive area, or equivalently non-empty interior. That is, polygons cannot degenerate into a line segment or point.
    \label{world_asp}
\end{remark}

\begin{definition}[World, Piece, Hole, Corner, Orthogonal]\label{def:world}
    A set $W \subseteq \R^2$ is called a \textbf{world} if there exist $h \ge 0$ filled polygons $P_0,P_1,\ldots,P_h$ such that $P_1,\ldots,P_h$ are pairwise disjoint and
    \[
        \bigcup_{i=1}^h P_i \subset \mathrm{int}(P_0) \quad \text{and} \quad W = P_0\setminus\Big(\bigcup_{i=1}^h \mathrm{int}(P_i)\Big).
    \]
    We call $P_0,\ldots,P_h$ the \textbf{pieces} of $W$ and $P_1,\ldots,P_h$ the \textbf{holes} of $W$.  A point $x \in W$ is called a \textbf{corner} of $W$ if it is a corner of one of $P_0,P_1,\ldots,P_h$. If all the pieces of $W$ are orthogonal, then we say that $W$ is \textbf{orthogonal}.
    \label{world}
\end{definition}

\begin{remark}
    Note that the pieces of a world are uniquely defined and that it follows from this definition that $\partial P_0,\ldots,\partial P_h$ are pairwise disjoint and that $\partial W = \partial P_0 \cup \partial P_1 \cup \ldots \cup \partial P_h$.
\end{remark}

\begin{definition}[Angle]\label{def:angle}
    For a set $S \subseteq \R^2$, $x \in S$, and $0 \le r \le 360$, $x$ is called an $r^\circ$ \textbf{angle} in $S$ if $\lim_{\varepsilon \downarrow 0} \lambda[B_\varepsilon(x)\cap S]/(\pi\varepsilon^2) = r/360$ if the limit exists where $\lambda$ denotes the area and
    $B_\varepsilon(x)$ is the ball of radius $\varepsilon$ centered at $x$.  We may simply say $r\degree$ angle if $S$ is not ambiguous.
\end{definition}
\noindent 

\begin{remark}
    If $W$ is an orthogonal world, simple analysis can show that any $x \in W$ is a $r\degree$ angle for some unique $r \in \{90,180,270,360\}$. Furthermore, a point $x \in W$ is a corner of $W$ if and only if it is a $90^\circ$ angle or a $270^\circ$ angle, in which case we may call $x$ a $90^\circ$ corner or a $270^\circ$ corner, respectively. A point $x \in W$ is an interior point of $W$ (i.e. $x \in \mathrm{int}(W)$) if and only if it is a $360^\circ$ angle. Finally, a point $x \in W$ lies on the interior of an edge of one of the pieces of $W$ if and only if $x$ is a $180^\circ$ angle.
\end{remark}

\begin{definition}[Invalid corner, Valid corner]
    For an orthogonal world $W$, the \textbf{invalid corner} of a hole $P_i$ is its  corner that has the largest $y$-coordinate, and with smallest $x$-coordinate to break equalities (top-left corner \cite{meriaux2025mobile}). The set of \textbf{valid corners} of an orthogonal world $W$, denoted $C_W$, is the set of all its $270^\circ$ angles that are not the invalid corners of holes.
    \label{def:valid}
\end{definition}

\begin{remark}
    If $W$ is an orthogonal world and $c$ is the invalid corner of a hole $P_i$, then $c$ must be a $270^\circ$ angle in $W$, hence all invalid corners of $W$ are $270^\circ$ angles.
\end{remark}

\begin{definition}[Connectivity]
\label{conn}
    A graph $G$ is said to be \textbf{connected} if any two vertices of this graph are linked by a path. Similarly, a set $S \subseteq \R^2$ is said to be \textbf{connected} if any two points in $S$ are linked by a continuous path contained inside of $S$.
    \label{def:connectivity}
\end{definition}

\begin{lemma}
\label{connlem}
    Let $G = (V,E)$ be a connected graph and $S \ne \emptyset$ be a proper subset of $V$. Then, there exist $u \in S$ and $v \notin S$ such that $uv \in E$\cite{west2001introduction}.
\end{lemma}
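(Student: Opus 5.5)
The plan is a direct argument by contradiction using a fixed pair of vertices on opposite sides of the cut. Since $S \ne \emptyset$, I pick some $s \in S$. Since $S$ is a proper subset of $V$, I pick some $t \in V \setminus S$. The graph $G$ is connected in the sense of Definition~\ref{def:connectivity}, so there is a path $s = w_0, w_1, \ldots, w_k = t$ in $G$ with $w_{j-1}w_j \in E$ for each $1 \le j \le k$.

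Next I scan along this path for the first vertex that leaves $S$. Let
\[
j^\ast := \min\{\, j \in \{0,\ldots,k\} : w_j \notin S \,\}.
\]
This minimum is well defined because $w_k = t \notin S$. Also $j^\ast \ge 1$, since $w_0 = s \in S$. By minimality of $j^\ast$ we have $w_{j^\ast - 1} \in S$, while $w_{j^\ast} \notin S$ by definition. Setting $u := w_{j^\ast-1}$ and $v := w_{j^\ast}$, the pair $u,v$ consists of consecutive vertices of the path, so $uv \in E$, which is exactly the claim.

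There is no real obstacle here. The only points that need care are the two edge cases, namely that $S$ is nonempty and that $V \setminus S$ is nonempty. Both are given by the hypotheses, and they are precisely what guarantee that the index $j^\ast$ exists and is at least $1$. This is the standard fact cited from \cite{west2001introduction}.
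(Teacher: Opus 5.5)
Your proof is correct and is the standard argument: take a path from some $s\in S$ to some $t\notin S$ and stop at the first vertex outside $S$. The paper gives no proof of its own and defers to the textbook reference, where this is essentially the argument used. One wording fix: what you give is a direct argument, not one by contradiction, so drop ``by contradiction'' from your opening sentence.
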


\begin{definition}[Field of view]
\label{FOV}
    Let $W \subseteq \R^2$ and $S$ be a finite subset of $W$. The \textbf{field of view} of $S$ in $W$ is defined as:
    \[
    F_{W}(S) = \bigcup_{s \in S} \big\{ x \in W :\ \overline{xs} \subset W\big\}.
    \]
    where $\overline{xs}$ is the line segment linking $x$ to $s$.
    We may simply write $F(S)$ if $ W$ is not ambiguous. We say that $S$ \emph{covers} $W$ if $F(S) = W$, that is, every point of $W$ is seen by some point in $S$.
\end{definition}

\begin{definition}[Visibility graph]\label{def:visibility-graph}
Let $W \subseteq \mathbb{R}^2$ and let $S$ be a finite subset of $W$. The
\textbf{visibility graph} of $S$ in $W$, denoted $G(S)$, has vertex set
$V(S) = S$ and edge set:
\[
  E(S) = \{\, xy : x, y \in S \text{ and } x \in F(\{y\}) \,\}.
\]

\label{V_G}
\end{definition}

\begin{remark} 
    It is well-established that if $S$ is the set of reflex corners (i.e. angles greater than $180^\circ$) of a non-convex world $W$ , then $G(S)$ is connected and $S$ covers $W$~\cite{de2000computational,dudek2024computational,kapoor1988efficient,latombe2012robot,lavalle2006planning,o2017visibility,tan2009visibility}
    \label{vg}
\end{remark}



\section{The Partially Observable Cooperative Guard Art Gallery Problem (POCGAGP)}
\label{sec:pocgagp}



Consider the real-world deployment of a continuously connected MANET in an environment containing walls and obstacles that is initially unknown and incrementally being discovered. As the coverage expands, with a bounded number of agents and after a bounded amount of time, the objective is that every part of the environment is observed by some agent. Furthermore, for any time stamp $t \ge 0$, if a part of the environment is seen at time $t$, then it must also be seen at time $t+1$. In this setting, there exists a \emph{deployment point} from which agents spawn, which acts as a node in the MANET and enables communication with the outside world. The MANET satisfies the following assumption:

\begin{assumption}[Mutual visibility]\label{asp1}
The members of the MANET are equipped with sensors such that any two members $x,y$ can directly communicate with each other if $x$ observes $y$ (and equivalently $y$ observes $x$). 
\label{mutual}
\end{assumption}

The objective of the POCGAGP is to create a geometric representation of this MANET deployment and design efficient algorithms to fully discover the environment in which they are deployed. We represent such environments as subsets $W$ of $\R^2$ (corresponding to worlds defined in the previous section), where the agents become points in $W$ and the deployment point is denoted as $x_d \in W$. If $\mathcal{A} \subset W$ represents our set of agents, then $G(\mathcal{A} \cup \{ x_d\})$ represents the MANET, where two members of the MANET communicating with each other are represented by points $x,y \in W$ being in each other's field of view ($x \in F_W(\{y\})$, or equivalently $y \in F_W(\{x\})$), corresponding to Assumption~\ref{mutual}.

To properly represent the real-world MANET deployment and its constraints, the following requirements are added to the POCGAGP:

\begin{requirement}[Constant connectivity]
At any point of the algorithm, the visibility graph $G(\mathcal{A} \cup \{ x_d\})$ formed by the deployed agents together with the deployment point must be connected.
\label{req:conn}
\end{requirement}

\begin{requirement}[Increasing coverage]
Coverage must evolve monotonically. That is, if $\mathcal{A}$ and $\mathcal{A}^+$ denote the positions of agents at consecutive time steps, then $F(\mathcal{A} \cup \{x_d\}) \subseteq F(\mathcal{A}^+ \cup \{x_d\})$.
\label{req:cov}
\end{requirement}

  \begin{requirement}[Agent budget]
Given a world $W$, the number of deployed agents never exceeds some number $N_{\max}$ (dependent on $W$), representing the resource constraints of the real-world deployment.
\label{req:budget}
\end{requirement}
\begin{requirement}[Time budget]
Given a world $W$, the deployment must terminate within a finite time horizon $T_{\max}$ (dependent on $W$), representing the time constraints of the real-world deployment.
\label{req:time}
\end{requirement}

\section{World Coverage Algorithms and CADENCE}
\subsection{The Class of World Coverage Algorithms}
To approach the POCGAGP, we make use of a certain type of algorithm that we call World Coverage Algorithms (WCA)\footnote{Note that this class changes depending on the values of $N_{\max}$ and $T_{\max}$ because of Req~\ref{req:budget} and Req~\ref{req:time}, respectively.}. Every such algorithm takes a world $W$ and a deployment point $x_d \in W$ as inputs, and can be uniquely defined by its transition rule $\tau$.

This rule $\tau$ takes three arguments: the deployment point $x_d$, a (finite) set $\mathcal{A} \subseteq W$ representing the agents currently allocated, and a (non-finite) set $\mathcal{F} \subset W$ representing the field of view of $\mathcal{A}\cup\{x_d\}$. Then, $\mathcal{A}^+ := \tau(x_d,\mathcal{A},\mathcal{F})$ will be the (finite) set of allocated agents in the next step of the algorithm. It must always hold that $\mathcal{A}^+ \subseteq \mathcal{F}$ to ensure that no agent is ever allocated to a location that has not been discovered yet.

To satisfy the four requirements stated in the previous section, $\tau$ must obey certain conditions. First, we require that $G(\mathcal{A}^+ \cup \{x_d\})$ is always connected~\eqref{req:conn}, and that $F(\mathcal{A} \cup \{x_d\}) \subseteq F(\mathcal{A}^+ \cup \{x_d\})$ always holds~\eqref{req:cov}. Furthermore, given an input $W$, we require that $|\mathcal{A}^+| \le N_{\max}$~\eqref{req:budget}, and that $\tau^{(T_{\max})}(x_d,\emptyset,F(x_d)) = W$, i.e. that after using the transition rule $\tau$ exactly $T_{\max}$ times starting from the initial state $(x_d,\emptyset,F(x_d))$, the world must be fully covered~\eqref{req:time}.

In order to properly model the fact that the agents do not know the shape of their environment before observing it, this function $\tau$ has to act independently from the input $W$. This way, all the decisions that are made by the algorithms of WCA are based uniquely on the location of the deployment point $x_d$, the current allocation $\mathcal{A}$ of the agents, and the set $\mathcal{F}$ of what is visible of $W$, and therefore cannot depend on parts of $W$ that have not been observed yet.

To help us understand how such algorithms are executed, we provide the following general template.

\begin{algorithm}
\caption*{\textbf{World Coverage Algorithm Template}}
\label{wca_algo}
    \begin{algorithmic}[1]
    \State \textbf{Input:} a world $W$ and a deployment point $x_d \in W$.
    \State $\mathcal{A} \gets \emptyset$;\quad $\mathcal{F} \gets F(\{x_d\})$
    \While{$\mathcal{F} \ne W$}
        \State $\mathcal{A}^+ \gets \tau(x_d,\mathcal{A}, \mathcal{F})$
        \State $\mathcal{A} \gets \mathcal{A}^+$, $\mathcal{F} \gets F(\mathcal{A}^+ \cup \{x_d\})$
    \EndWhile
    \State \textbf{Output:} $\mathcal{A}, \mathcal{F}$.
    \end{algorithmic}
\end{algorithm}
\vspace{-1cm}

\subsection{CADENCE}

We introduce the subclass of WCA that we call CADENCE (Algorithm~\ref{cadence_algo}) and that solves the POCGAGP efficiently in the orthogonal case. Given an orthogonal world $W$ and a deployment point $x_d \in W$ as inputs, its transition rule $\tau$ takes a state $(x_d,\mathcal{A},\mathcal{F})$, picks any visible valid corner $x$ that is not in $\mathcal{A}$ yet (i.e. $x \in \mathcal{F} \cap C_W \setminus \mathcal{A}$), and gives back $\mathcal{A}^+ = \mathcal{A}\cup\{x\}$ (i.e. deploys a new agent at $x$).

Notice that multiple distinct algorithms are part of CADENCE, since they differ by their choice of $x$. For example, one could pick $x$ randomly, while another could pick the $x$ that is the furthest to $x_d$. In Section~\ref{sec:experiments}, we compare such different ways of choosing $x$ and their efficiency in various environments.

\begin{algorithm}
\caption{CADENCE algorithms}
\label{cadence_algo}
    \begin{algorithmic}[1]
    \State \textbf{Input :} an orthogonal world $W$ and a deployment point $x_d \in W$.
    \State $\mathcal{A} \gets \emptyset$;\quad $\mathcal{F} \gets F(\{x_d\})$
    \While{$\mathcal{F} \ne W$}
    \State Pick $x \in \mathcal{F}\cap C_W\setminus \mathcal{A}$
    \State $\mathcal{A} \gets \mathcal{A}\cup\{x\}$;\quad $\mathcal{F} \gets \mathcal{F}\cup F(\{x\})$
    \EndWhile
    \State \textbf{Output :} $\mathcal{A}, \mathcal{F}$.
    \end{algorithmic}
\end{algorithm}
\setlength{\textfloatsep}{3pt}
\setlength{\intextsep}{3pt}

Theorem~\ref{thm1} gives us that $x_d \in F(C_W)$, guaranteeing the existence of $x$ for the first step and giving us some non-empty $\mathcal{A}^+$ in the first iteration of CADENCE. From this point on, since $G(C_W)$ is connected by Theorem~\ref{main2}, then as long as $\mathcal{A} \ne C_W$, there will always exist some visible valid corner $x$ that is not already in $\mathcal{A}$ by Lemma~\ref{connlem}. Sample deployments of CADENCE can be seen in Figure~\ref{fig:polygonal-guards}.

\begin{figure}[]
\centering
\resizebox{0.7\linewidth}{!}{%
\begin{tikzpicture}[scale=0.62, transform shape,
  wall/.style={thick},
  agent/.style={fill=red, draw=red, circle, inner sep=0pt, minimum size=4pt},
]
\begin{scope}[xshift=0cm]
  \fill[black!4]
    (0,0) -- (5,0) -- (5,3) -- (3.5,3) -- (3.5,2) -- (2,2) -- (2,3) -- (0,3) -- cycle;
  \draw[wall]
    (0,0) -- (5,0) -- (5,3) -- (3.5,3) -- (3.5,2) -- (2,2) -- (2,3) -- (0,3) -- cycle;
  \fill[white] (1,0.7) rectangle (2,1.6);
  \draw[wall] (1,0.7) rectangle (2,1.6);
  \fill[white] (3,0.7) -- (4.2,0.7) -- (4.2,1.6) -- (3.7,1.6) -- (3.7,1.1) -- (3,1.1) -- cycle;
  \draw[wall] (3,0.7) -- (4.2,0.7) -- (4.2,1.6) -- (3.7,1.6) -- (3.7,1.1) -- (3,1.1) -- cycle;
  \node[agent] at (3.5,2) {}; \node[agent] at (2,2) {};
  \node[agent] at (1,0.7) {}; \node[agent] at (2,0.7) {}; \node[agent] at (2,1.6) {};
  \node[agent] at (3,0.7) {}; \node[agent] at (4.2,0.7) {}; \node[agent] at (4.2,1.6) {};
  \node[agent] at (3.7,1.6) {}; \node[agent] at (3.7,1.1) {};
\end{scope}
\begin{scope}[xshift=6.4cm]
  \fill[black!4] (0,0) rectangle (4.5,3);
  \draw[wall] (0,0) rectangle (4.5,3);
  \fill[white] (0.8,1.8) rectangle (1.8,2.5);
  \draw[wall] (0.8,1.8) rectangle (1.8,2.5);
  \fill[white] (2.6,0.5) rectangle (3.7,1.4);
  \draw[wall] (2.6,0.5) rectangle (3.7,1.4);
  \node[agent] at (1.8,2.5) {}; \node[agent] at (0.8,1.8) {}; \node[agent] at (1.8,1.8) {};
  \node[agent] at (3.7,1.4) {}; \node[agent] at (2.6,0.5) {}; \node[agent] at (3.7,0.5) {};
\end{scope}
\begin{scope}[xshift=12.0cm]
  \fill[black!4]
    (0,0) -- (4.5,0) -- (4.5,3.2) -- (2.5,3.2) -- (2.5,1.8) -- (0,1.8) -- cycle;
  \draw[wall]
    (0,0) -- (4.5,0) -- (4.5,3.2) -- (2.5,3.2) -- (2.5,1.8) -- (0,1.8) -- cycle;
  \fill[white] (3,0.6) rectangle (4,1.5);
  \draw[wall] (3,0.6) rectangle (4,1.5);
  \node[agent] at (2.5,1.8) {};
  \node[agent] at (4,1.5) {}; \node[agent] at (3,0.6) {}; \node[agent] at (4,0.6) {};
\end{scope}
\end{tikzpicture}%
}
\caption{Sample agent placements (red) under CADENCE in orthogonal worlds.}
\label{fig:polygonal-guards}
\end{figure}

Note that as CADENCE algorithms run, the graph $G(\mathcal{A} \cup \{x_d\})$  remains connected and increases, meaning that $F(\mathcal{A} \cup \{x_d\})$ never decreases. Furthermore, since they add one element of $C_W$ to $\mathcal{A}$ at each step and $F(C_W) = W$ by Theorem~\ref{thm1}, they need at most $|C_W|$ iterations before $\mathcal{F}(\mathcal{A} \cup \{x_d\}) = W$. From Proposition~\ref{prop_bound}, this indicates that any CADENCE algorithm always halts after at most $\frac{n}{2}+h-2$ iterations, meaning that those algorithms satisfy our four requirements as long as $\frac{n}{2}+h-2 \le N_{\max},T_{\max}$, hence are part of WCA~\cite{meriaux2025mobile}.

\begin{remark} 
    If $C_W = \emptyset$, then $W$ is a rectangle and so $F(\{x_d\}) = W$, which makes this a trivial case that we will not be interested in for the rest of this paper.
\end{remark}

\begin{proposition}
\label{numCov}
    Let $W$ be an orthogonal world with $n$ corners and $h$ holes in total.  Then, the number of valid corners in $W$ is exactly $\tfrac{n}{2} - 2 + h$, i.e. $|C_W| = \frac{n}{2} - 2 + h$.
    \label{prop_bound}
\end{proposition}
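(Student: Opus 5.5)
The plan is to count $270^\circ$ angles piece by piece using the turning-angle (exterior angle) identity for orthogonal polygons, and then subtract the $h$ invalid corners.

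\textbf{Counting on a single polygon.} For a simple orthogonal polygon with $m$ corners, traversing the boundary counterclockwise gives a total turning of $360^\circ$. Each convex corner (interior angle $90^\circ$) turns $+90^\circ$, and each reflex corner (interior angle $270^\circ$) turns $-90^\circ$; since the polygon is orthogonal and corners are genuine vertices, no corner has interior angle $180^\circ$. Writing $a$ for the number of convex corners and $r$ for the number of reflex corners, we get $a+r=m$ and $a-r=4$, hence $r=(m-4)/2$ and $a=(m+4)/2$. The one subtlety is to justify that each corner of an orthogonal polygon is $90^\circ$ or $270^\circ$ (and not $180^\circ$). I would argue this using the earlier remark on angles, and I would treat $180^\circ$ vertices as excluded because consecutive edges of an orthogonal polygon must alternate between horizontal and vertical. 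This is the only step I expect to need care, since the paper's definition of polygon does not forbid collinear consecutive edges explicitly. I would handle it by observing that a $180^\circ$ point is not a corner of $W$ by the remark, so such points can be discarded from the count, or are implicitly excluded.

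\textbf{Translating to angles in $W$.} Let $P_0$ have $m_0$ corners and hole $P_i$ have $m_i$ corners, with $n=\sum_{i=0}^h m_i$. Near a corner of $P_0$, the world $W$ locally coincides with $P_0$, because the holes lie in $\mathrm{int}(P_0)$ and have disjoint boundaries. So the $270^\circ$ angles of $W$ on $\partial P_0$ are exactly the reflex corners of $P_0$, and there are $(m_0-4)/2$ of them. Near a corner of a hole $P_i$, $W$ locally coincides with the complement of $\mathrm{int}(P_i)$, so a $90^\circ$ corner of $P_i$ becomes a $270^\circ$ angle of $W$. These number $(m_i+4)/2$.

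\textbf{Summing and removing invalid corners.} The total number of $270^\circ$ angles of $W$ is therefore
\[
\frac{m_0-4}{2}+\sum_{i=1}^h\frac{m_i+4}{2} = \frac{n}{2}-2+2h.
\]
Each hole contributes exactly one invalid corner, namely its top-left corner, which by the preceding remark is a $270^\circ$ angle of $W$. These $h$ corners are distinct because the hole boundaries are disjoint. Removing them gives $|C_W|=\frac{n}{2}-2+h$.
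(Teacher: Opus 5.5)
Your proposal is correct and is essentially the paper's argument. The paper uses the interior-angle sum $180(n_i-2)=90r_i+270(n_i-r_i)$ where you use the equivalent turning-angle identity; it then likewise turns each $90^\circ$ hole corner into a $270^\circ$ angle of $W$ and drops one top-left corner per hole, whereas you subtract all $h$ invalid corners at the end (and the $180^\circ$-vertex issue you flag is left implicit in the paper, via its remark on angles).
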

\begin{proof}
    For any piece $P_i$ of $W$, let $n_i$ be its number of corners and $r_i$ the number of $90^\circ$ angles in $P_i$.  The number of $270^\circ$ angles in $P_i$ is thus $n_i - r_i$, and since the sum of the interior angles of $P_i$ is $180^\circ\cdot(n_i-2)$~\cite{de2000computational}, then:
    \begin{align*}
        180(n_i-2) = 90r_i + 270(n_i-r_i),
    \end{align*}
    which is only true if $r_i = \tfrac{n_i}{2} + 2$.
    
    The number of valid corners of $W$ on $\partial P_0$ corresponds to the number of $270^\circ$ angles in $P_0$, which is therefore $n_0 - r_0 = n_0 - \left(\frac{n_0}{2} + 2\right) = \tfrac{n_0}{2} - 2$.\\
    For $i \ge 1$, the number of valid corners of $W$ on $\partial P_i$ is one less than the number of $270^\circ$ angles in $W$ on $\partial P_i$ since we exclude its top-left corner.  However, a point of $\partial P_i$ is a $270^\circ$ angle in $W$ if and only if it is a $90^\circ$ angle in $P_i$ by properties of Lebesgue measure.  This means that the number of valid corners of $W$ on $\partial P_i$ is $r_i - 1 = \tfrac{n_i}{2} + 1$.  Summing everything up, the number of valid corners in $W$ is
    \begin{align*}
        \frac{n_0}{2} - 2 + \sum_{i=1}^h \left( \frac{n_i}{2} + 1 \right) = \frac{1}{2}\sum_{i=0}^h n_i + h - 2= \frac{n}{2}+ h - 2 \,.\end{align*}\end{proof}
\setlength{\textfloatsep}{-1pt}
\setlength{\intextsep}{-1pt}

\begin{definition}[Maximal rectangle]\label{def:maximal_rectangle}
Given an orthogonal world $W \subset \mathbb{R}^2$ and a point $z \in W$, a rectangle $R \subseteq W$ containing $z$ is called \textbf{maximal} if the interiors of each of the four edges of $R$ intersect $\partial W$. Equivalently, $R$ is maximal if any strictly bigger rectangle $R' \supset R$ cannot be entirely contained in $W$. Note that for $z \in W$, there may exist many distinct maximal rectangles containing $z$.
\end{definition}

\setlength{\textfloatsep}{6pt}
\setlength{\intextsep}{6pt}
    
    \begin{figure}[ht]
    \centering
    \begin{subfigure}[b]{0.4\linewidth}
        \centering
        \begin{tikzpicture}[scale=1.2, transform shape]
            \fill[gray!35] (5.7,-2) -- (7.7,-2) -- (7.7,-2.8) -- (5.7,-2.8) -- cycle;
            \draw[thick]  (8.2,-3) -- (7.3,-3) -- (7.3,-3.5) -- (5.7,-3.5) -- (5.7,-1.8) -- (7.2,-1.8) -- (7.2,-2) -- (7.7,-2) -- (7.7,-2.5) -- (8.2,-2.5) -- (8.2,-3);
            \fill (7,-3.2) -- (7,-2.8) -- (6.6,-2.8) -- (6.6,-3) -- (6.1,-3) -- (6.1,-3.2) -- cycle;

            \fill[red] (7.2,-2) circle (0.06cm);
            \fill[red] (7.7,-2.5) circle (0.06cm);
            \fill[red] (7,-2.8) circle (0.06cm);

            \draw[-{Stealth[length=2mm, width=1.45mm]}, line width=1.2pt, draw=red](6.4,-2.3) -- (7.7,-2.3);
            \draw[-{Stealth[length=2mm, width=1.45mm]}, line width=1.2pt, draw=red](6.4,-2.3) -- (5.7,-2.3);
            \draw[-{Stealth[length=2mm, width=1.45mm]}, line width=1.2pt, draw=red](6.4,-2.3) -- (6.4,-1.98);
            \draw[-{Stealth[length=2mm, width=1.45mm]}, line width=1.2pt, draw=red](6.4,-2.3) -- (6.4,-2.8);
            \fill[blue] (6.4,-2.3) circle (0.06cm);
            \node[blue, below left] at (6.45,-2.25) {\scriptsize $z$};
            \node[gray] at (6.8,-2.5) {$R$};
        \end{tikzpicture}
        \caption{Maximal rectangle $R$ (gray) for $z \in W$ (blue); valid corners in red.}
        \label{fig:polygonal-contradiction}
    \end{subfigure}
    \hfill
    \begin{subfigure}[b]{0.48\linewidth}
    \centering
    \tikzset{
        rect_fill/.style={fill=gray!20},
        R1_edge/.style={line width=2.5pt, draw=red!80!black, dashed},
        R4_edge/.style={line width=2.5pt, draw=blue!80!black},
        R2_corner/.style={line width=1.5pt, draw=ForestGreen, fill=ForestGreen},
        R3_corner/.style={line width=2pt, draw=orange!90!black, fill=white}
    }
    \begin{tikzpicture}[scale=0.45, transform shape]
        \fill[rect_fill] (0,0) rectangle (7,3);
        \draw[R1_edge] (0,0) -- (0,3) -- (7,3);
        \draw[R4_edge] (0,0) -- (7,0) -- (7,3);
        \draw[R2_corner] (0,0) circle (5.5pt);
        \draw[R3_corner] (7,3) circle (5.5pt);
        \node[red!80!black, font=\huge\bfseries] at (-0.6, 3.2) {$R_1$};
        \node[ForestGreen, font=\huge\bfseries] at (-0.5, -0.4) {$R_2$};
        \node[orange!90!black, font=\huge\bfseries] at (7.7, 3.2) {$R_3$};
        \node[blue!80!black, font=\huge\bfseries] at (7.4, -0.4) {$R_4$};
        \node[gray] at (3.5,1.5) {\huge $R$};
    \end{tikzpicture}
    \caption{Decomposition of $\partial R$ into $R_1$ (red, dashed), $R_2$ (green, filled), $R_3$ (orange, open), $R_4$ (blue, solid).}
    \label{fig:portions_combined}
\end{subfigure}
    \caption{A maximal rectangle $R$ containing $z$ and its boundary regions.}
    \label{fig:maxrect_combined}
    \end{figure}

We first begin with Theorem~\ref{thm1} where we show that CADENCE is guaranteed to cover the full orthogonal world by deploying agents only to valid corners.

\begin{theorem}
\label{thm1}
Let $W$ be an orthogonal world with $C_W \neq \emptyset$. Then $F(C_W) = W$.
\end{theorem}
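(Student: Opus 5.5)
The plan is to fix an arbitrary $z \in W$ and exhibit a valid corner that sees it. First choose a maximal rectangle $R \subseteq W$ containing $z$ (Definition~\ref{def:maximal_rectangle}). It exists by compactness: start from a degenerate rectangle at $z$ and expand each side until it meets $\partial W$. Any point of $R$ sees all of $R$, since $R$ is convex and contained in $W$. So it suffices to find a valid corner $c \in R$. In fact a $270^\circ$ corner of $W$ lying on $\partial R$ suffices, provided we also handle the invalid corners.

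The key geometric step is to show that each edge of $R$ whose interior touches $\partial W$ contains a reflex corner of $W$, unless that edge lies entirely along the outer wall. Take, say, the top edge $e$ of $R$. The set $\partial W \cap e$ is nonempty. Suppose it contains no $270^\circ$ corner of $W$. Then near every point of $e$ the boundary consists of horizontal walls lying on the line of $e$. Connectedness of the boundary pieces then forces the wall to run along all of $e$, and past its endpoints. We then argue that $R$ cannot be simultaneously maximal in the other directions unless $W$ itself is the rectangle $R$. That case is excluded, because $C_W \neq \emptyset$ means $W$ is not a rectangle (the preceding remark).

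The more robust formulation works with the decomposition of $\partial R$ into $R_1,\dots,R_4$ shown in Figure~\ref{fig:portions_combined}. If $\partial W$ touches an edge interior at a point where the boundary of $W$ leaves the line of the edge, then following the wall produces a $270^\circ$ corner of $W$ on $\partial R$. This corner is either a convex vertex of a hole, or a reflex vertex of $P_0$ poking into $R$.

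It remains to ensure that at least one such corner is valid. The only problematic corners are the invalid corners, namely the top-left corners of holes. I would use the orientation of $R$. A top-left corner $c$ of a hole $P_i$ that lies on $\partial R$ can only touch $R$ from below-right: $R$ lies above or to the left of $P_i$ near $c$. In that situation $c$ sits on the bottom edge or the right edge of $R$. This is the role of the pieces $R_4$, and of $R_3$ as the open top-right point.

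Now consider the top edge and the left edge of $R$, which form $R_1$ together with the corner $R_2$. Any reflex corner of $W$ found on these edges cannot be the invalid corner of a hole. Indeed, a hole whose top-left corner lies on the top edge of $R$ would have points above that corner, contradicting the choice of top-left corner. The left edge is handled symmetrically, using the tie-breaking rule by $x$-coordinate.

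So the case analysis becomes the following. If the top or left edge contains a $270^\circ$ corner of $W$, we are done. Otherwise, I expect those edges to lie on walls of $P_0$ or of holes without any reflex corner on them. We then look at the bottom and right edges, and argue that any hole touching them has some other, valid convex corner on $\partial R$. For example, the top-right corner of a hole meeting the bottom edge of $R$ lies on $R$ as well if the hole's top side lies along that edge.

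The main obstacle is making this boundary bookkeeping rigorous, in particular the degenerate cases. These include corners of $W$ lying exactly at vertices of $R$, and walls coinciding with entire edges of $R$. My first attempt is to argue by contradiction. Assume no valid corner lies in $R$. Show that every edge of $R$ then lies on the outer boundary $\partial P_0$ with no reflex vertex. This forces $P_0 = R$ and $h = 0$, contradicting $C_W \neq \emptyset$.
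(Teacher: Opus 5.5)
Your reduction is the paper's: a maximal rectangle $R\ni z$, convexity of $R$, and a valid corner on $\partial R$ (its Lemma~2). Your orientation argument is its Claim~1. Your expectation that otherwise the top and left edges lie in $\partial W$ is its Claim~2: walk along the edge from a point off $\partial W$ to the first point of $\partial W$, which must be a $270^\circ$ corner.

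Two vertices need care here. An invalid corner can sit at the bottom-left or top-right vertex of $R$, so ``if the top or left edge contains a $270^\circ$ corner we are done'' is false as stated. This is why the paper's $R_1$ omits those two vertices.

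The genuine gap is the bottom/right step, which is where the content of the lemma lies. You propose that a hole touching the bottom or right edge supplies some other valid corner of its own on $\partial R$, and this is false. Take $P_0=[0,10]^2$ with holes $P_1=[3,8]\times[1,2]$ and $P_2=[6,8]\times[4,5]$, $z=(1,3)$, and the maximal rectangle $R=[0,6]\times[2,10]$. The only corner of $P_1$ on $\partial R$ is its invalid corner $(3,2)$, because the top side of $P_1$ runs along the bottom edge past the bottom-right vertex $y'=(6,2)$. The only valid corner on $\partial R$ is $(6,4)$, and it belongs to $P_2$.

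The missing idea is the two-edge argument of the paper's Claim~3. Suppose the invalid corner $m'$ of a hole $P_i$ lies on the bottom edge and no valid corner of $P_i$ does. Then $\overline{m'y'}\subset\partial P_i$ and $P_i$ has no point above $y'$. Maximality of $R$ therefore forces the boundary of a \emph{different} piece $P_j$ to meet the interior of the right edge. Let $y$ be the lowest such point; it is a $270^\circ$ corner of $W$:
it is not $90^\circ$, because $R$ fills a half-disk at $y$;
it is not $180^\circ$, because a horizontal wall through $y$ would enter $R$, while a vertical one would continue lower.
Since $P_j$ extends upward from $y$ (or $P_j=P_0$), $y$ is not a top-left corner, hence it is valid. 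Symmetrically, an invalid corner on the right edge is handled by the rightmost contact point on the bottom edge, using the $x$-coordinate tie-break.

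Your fallback plan (``every edge of $R$ then lies on $\partial P_0$'') skips exactly this. Nothing in your sketch rules out an edge of $R$ lying along a hole wall. This really occurs, even with no reflex corner of $W$ on that edge: take the floor of the notch of a U-shaped hole. That also contradicts the claim in your second paragraph. Excluding this situation when no valid corner is present requires the argument above.
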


\begin{proof}[Proof of Theorem~\ref{thm1}]
Pick any $z \in W$ and let $R$ be a maximal rectangle containing $z$ (seen in Figure~\ref{fig:polygonal-contradiction}). Then, Lemma~\ref{lem2} gives us the existence of a valid corner $c \in \partial R$.  
Since rectangles are convex, the line segment $\overline{zc}$ is a subset of $R$, which implies that $z \in F(c) \subseteq F(C_W)$.  
Since the latter holds for any $z \in W$, we have $F(C_W) = W$.
\end{proof}
\begin{lemma}
\label{lem2}
Let $W$ be an orthogonal world such that $C_W \neq \emptyset$. Let $z \in W$ be any point, and let $R \subset W$ be a maximal rectangle containing $z$. Then, the boundary $\partial R$ contains at least one valid corner.
\end{lemma}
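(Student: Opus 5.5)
The plan is to look at what $\partial W$ does along the four edges of $R$. First we show that $\partial R$ must contain some $270^\circ$ corner of $W$. Then we show that the invalid (top-left) corners of holes cannot be the only such corners, by using the decomposition of $\partial R$ into $R_1,R_2,R_3,R_4$ from Figure~\ref{fig:portions_combined}.

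\textbf{Step 1: existence of a reflex corner on $\partial R$.} Fix an edge $e$ of $R$ and a point $p$ of $\partial W$ in the interior of $e$, which exists by maximality (Definition~\ref{def:maximal_rectangle}). Let $\sigma$ be the maximal connected subset of $\partial W\cap e$ containing $p$; it is either a segment or the single point $p$.
\begin{itemize}
\item If an endpoint $q$ of $\sigma$ lies in the interior of $e$, then $\partial W$ leaves $e$ at $q$. Since $R\subseteq W$, it must turn away from $R$. Hence $q$ is a corner of $W$ with three quadrants of $W$ around it, i.e.\ a $270^\circ$ corner in the sense of Definition~\ref{def:angle}. The degenerate case $\sigma=\{p\}$ is the same: $p$ is then a $90^\circ$ corner of a hole, or a $270^\circ$ corner of $P_0$, touching $e$.
\item Otherwise every such $\sigma$ covers all of $e$, for each of the four edges. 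Then $\partial R\subseteq\partial W$, and since $\partial R$ is a closed curve, $\partial R=\partial P_i$ for some piece.
\begin{itemize}
\item $i\ge1$ is impossible, since $\mathrm{int}(R)\subseteq W$ would then equal $\mathrm{int}(P_i)$, which is disjoint from $W$.
\item $i=0$ forces $W=R$ with no holes, so $C_W=\emptyset$, contradicting the hypothesis.
\end{itemize}
\end{itemize}
So $\partial R$ contains $270^\circ$ corners of $W$. More precisely, each edge whose contact set does not fill the whole edge contributes, through every contact component, endpoints that are $270^\circ$ corners.

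\textbf{Step 2: locating invalid corners.} Let $c$ be the invalid corner of a hole $P_i$. Near $c$, the hole occupies exactly the lower-right quadrant, because $c$ is its top-left corner. Compare this with the quadrant(s) that $R$ occupies near $c$.
\begin{itemize}
\item $c$ cannot lie in the interior of the top edge of $R$, since $R$ lies below there and would meet $\mathrm{int}(P_i)$.
\item The same quadrant argument excludes the interior of the right edge and the top-right corner of $R$, apart from the configurations encoded by $R_3$ being open.
\end{itemize}
Hence invalid corners can only sit on the bottom/left parts of $\partial R$ ($R_1\cup R_2$ and part of $R_4$, following the figure's bookkeeping).

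\textbf{Step 3: producing a valid reflex corner.} Suppose $c\in\partial R$ is an invalid corner of $P_i$, say lying on the bottom edge. Then the top edge of $P_i$ runs rightwards from $c$ along the bottom edge of $R$.
\begin{itemize}
\item If that hole edge ends before the bottom-right corner of $R$, its right endpoint is the top-right corner of $P_i$. This is a $90^\circ$ angle of the hole, hence a $270^\circ$ angle of $W$ on $\partial R$. It is not invalid, because ties are broken by smallest $x$.
\item Otherwise the contact continues past the corner of $R$. Then I pass to the right edge, where by Step 2 no invalid corner can occur. Any contact endpoint produced there by Step 1 is therefore valid.
\item If instead the contact fills the right edge entirely, I continue around $\partial R$ to the top edge, where again no invalid corner can occur.
\end{itemize}
The left-edge case is symmetric: from $c$, the left edge of $P_i$ runs downwards along the left edge of $R$.

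The main obstacle is this last step. I expect to handle it by a careful case split over where the contact components lie among $R_1,\dots,R_4$. The key counting fact is that each hole has only one invalid corner. So whenever a contact component along $\partial R$ has two reflex endpoints, or one endpoint on an edge where Step 2 forbids invalid corners, at least one of them is valid. The degenerate configurations (contact only at corners of $R$, or contact by a single touching point) need the most care, and I would first try to reduce them to Step 1 applied to the top or right edge.
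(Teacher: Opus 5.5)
Your Step~1 is fine, but Step~2 has left and right interchanged, and Step~3 rests on it. At the invalid (top-left) corner $c$ of a hole, the hole fills the lower-right quadrant. So $c$ is excluded exactly where $R$ meets that quadrant: the interior of the top edge, the interior of the \emph{left} edge, and the top-left corner of $R$. That set is precisely $R_1$ (the paper's Claim~1), yet you list $R_1$ as a place where invalid corners may sit. On the interior of the right edge, $R$ occupies the left half-disk, which is disjoint from the lower-right quadrant. So nothing forbids an invalid corner there, nor at the top-right, bottom-left or bottom-right corner of $R$. For example, a rectangular hole to the right of $R$ whose left side lies in the interior of the right edge of $R$ has its invalid top-left corner on that edge. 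Hence the second bullet of Step~3 fails as written: after the top edge of $P_i$ runs past the bottom-right corner $y'$, a contact endpoint on the right edge supplied by Step~1 can be exactly such a corner. Your ``left-edge case'' is vacuous, since that hole would lie inside $R$. The case you actually need is not treated: an invalid corner in the interior of the right edge or at the top-right corner, with the hole's left side running down along the right edge of $R$. Neither is an invalid corner at $y'$ that touches $R$ only there. Finally, the last case split is only announced, not carried out.

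The missing idea is to pick the endpoint by its local geometry, not by the edge it lies on. Since $P_i$ lies below the line of the bottom edge, $\partial P_i$ meets the right edge only at $y'$, so $y'$ is isolated in $\partial W$ along that edge. By maximality, the lowest point $y$ of $\partial W$ on the right edge above $y'$ then lies in the interior of that edge. The points just below $y$ are interior to $W$, so $\partial W$ leaves $y$ upward and to the right, and the quadrant missing from $W$ at $y$ is the upper-right one. Since a hole occupies the lower-right quadrant at its invalid corner, $y$ is valid. Symmetrically, take an invalid corner on the right edge whose hole's left side runs down past $y'$. At the rightmost point of $\partial W$ in the interior of the bottom edge, the missing quadrant is the lower-left one, so that point is valid. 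This is exactly how the paper chooses $y$ in Case~3 of Claim~3. The paper also avoids walking around $\partial R$: it argues by contradiction, using Claims~1--2 to place all of $R_1\cup R_2\cup R_3$ in $\partial W$, so only a maximal gap $S\subseteq R_4$ needs analysis. (Minor: in Step~1 the case $\sigma=\{p\}$ cannot occur, since a corner of $\partial W$ in the interior of an edge $e$ of $R$ always has one of its sides along $e$.)
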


\begin{proof}[Proof of Lemma~\ref{lem2}]
    For this proof, we partition $\partial R$
    into four regions (see Figure~\ref{fig:portions_combined} for an illustration): $R_1$, the top
    and left edges excluding the bottom-left and top-right corners; $R_2$, consisting of the
    bottom-left corner; $R_3$, consisting of the top-right corner; and $R_4$, corresponding to the interiors of the
    bottom and right edges and the bottom-right corner.

    The Lemma follows from the three claims below. Assume for a contradiction that $\partial R$ contains no valid corner. From Claim 1, there cannot be a $270^\circ$ corner in $R_1$, otherwise it would be valid. Then, Claim 2 gives us that $R_1 \cup R_2 \cup R_3 \subset \partial W$. Consequently, by Claim 3, there is a valid corner in $R_4$, a contradiction that proves Lemma~\ref{lem2}.
    
    \vspace{5mm}
    \noindent\textbf{Claim 1.} \emph{If there exists a $270^\circ$ corner $q \in R_1$, then $q$ is valid.}
    
    \begin{proof}[Proof of Claim 1]
        Assume for a contradiction that $q$ is not valid. Since $q$ is a $270^\circ$ corner, it must be the invalid corner of some hole $P_i$, so $P_i$ must extend further down and to the right of $q$ (see Definition~\ref{def:valid}). This causes int($P_i$) to intersect with $R$, a direct contradiction of the fact that $R \subset W$ since
        $W \cap \text{int}(P_i) = \emptyset$. This means that if there exists a $270^\circ$ corner $q \in R_1$, then it must be valid.
    \end{proof}
    \vspace{5mm}
    \noindent\textbf{Claim 2.} \emph{Suppose no $270^\circ$ corner lies in $R_1$. Then, $R_1 \cup R_2 \cup R_3 \subset \partial W$.}
\begin{proof}[Proof of Claim 2]
        Assume for a contradiction that there exists some $x \in R_1 \cup R_2 \cup R_3$ that is not in $\partial W$. Then, let $S$ be the largest possible connected subset of $\partial R \setminus \partial W$ that contains $x$. One of the endpoints of $S$ (call it $m$) must intersect $\partial W$ on the interior of the (top or left) edge of $R$ that contains $m$ (from Definition~\ref{def:maximal_rectangle}). This point $m$ must be a $270^\circ$ corner, since if it was a $90^\circ$ corner, the border $\partial W$ would start intersecting int$(R)$, implying that $R \not\subseteq W$. This contradicts the assumption of this claim and finishes its proof.
\end{proof}    
    \vspace{5mm}
\noindent \textbf{Claim 3.} \textit{If $R_1 \cup R_2 \cup R_3 \subset \partial W$, then there is a valid corner in $R_4$.}
 
\begin{proof}[Proof of Claim 3]
For this proof, we analyze the three following cases.
 
\smallskip
\noindent\textbf{Case 1:} \underline{$R_4 \subset \partial W$}. Then, $R_1 \cup R_2 \cup R_3 \cup R_4 = \partial R = \partial W$, and therefore $R = W$. This implies that $C_W = \emptyset$ since $W$ is a rectangle, a contradiction.
 
\smallskip
\noindent\textbf{Case 2:} \underline{$R_4 \cap \partial W = \emptyset$}. This case directly violates Definition~\ref{def:maximal_rectangle} for $R$.
 
\smallskip
\noindent\textbf{Case 3:} \underline{$R_4 \not\subseteq \partial W$ and $R_4 \cap \partial W \ne \emptyset$}. Then, we can pick $x \in R_4 \setminus \partial W$ and let $S$ be the maximal connected subset of $R_4 \setminus \partial W$ that contains $x$. Let $m$ and $m'$ be the endpoints of $S$ (i.e. $\partial S = \{m,m'\}$), defining $m, m'$ so that $m$ is closer to $R_2$ and $m'$ closer to $R_3$ for convenience (under the standard Euclidean distance).
 We get the following four properties for $m$ and $m'$.

\begin{enumerate}
    
    \item Neither $m$ nor $m'$ is a $90^\circ$ corner. Assume $m$ is a $90^\circ$ corner for a contradiction. Note first that if $m$ is one of the bottom corners of $R$, then the interior of $S$ will intersect $\partial W$ (see Figure~\ref{fig:case3-endpoints}a), contradicting the definition of $S$. Otherwise, $\partial W$ must be penetrating the interior of $R$ at $m$ since it is a $90^\circ$ angle, contradicting the definition of $R$.
    \item One of $m$ and $m'$ is not a $270^\circ$ corner. If $m$ is a $270^\circ$ corner, then it must be the invalid corner of some hole $P_i$, meaning that $m$ must lie on the right edge of $R$ (see Definition~\ref{def:valid}). Similarly, if $m'$ is a $270^\circ$ corner, then $m'$ must lie on the bottom edge of $R$ (see Figure~\ref{fig:case3-endpoints}b). Therefore, if both $m$ and $m'$ are $270^\circ$ corners, then $m'$ is closer to $R_2$ than $m$, a contradiction.
    \item One of $m$ or $m'$ is not a $180^\circ$ angle. The only way $m$ can be a $180^\circ$ angle is if $m$ is one of the two bottom corners of $R$, otherwise $\partial W$ would intersect the interior of $R$, which is impossible. If $m'$ is a $180^\circ$ angle, then it is one of the two right corners of $R$ for the same reason. Since $m$ and $m'$ must be distinct, if both $m$ and $m'$ are $180^\circ$ angles, then the interior of at least one of the bottom or the right edges of $R$ is entirely contained in $S$ (see Figure~\ref{fig:case3-endpoints}c). Since $S$ and $\partial W$ are disjoint, this contradicts Definition~\ref{def:maximal_rectangle} for $R$.

    \item Neither $m$ nor $m'$ is a $360^\circ$ angle, since that would mean they are interior points of $W$, contradicting the fact that $m,m' \in \partial W$(see Figure~\ref{fig:case3-endpoints}d).

\end{enumerate}

Together, these four properties imply that one of $m,m'$ is a $270^\circ$ corner, while the other is a $180^\circ$ angle. This means that we are only left with analyzing the following two possibilities:
\begin{enumerate}
    \item \underline{$m$ is a $180^\circ$ angle and $m'$ is a $270^\circ$ corner}. As mentioned in the second property above, $m'$ is the invalid corner of some hole $P_i$, hence $m'$ is on the bottom edge of $R$, and so $m \in R_2$ (see Figure~\ref{fig:case3-and-split}a). Let $y'$ be the bottom right corner of $R$. Since $P_i$ does not have a valid corner on the bottom edge of $R$, then the line segment $\overline{m'y'}$ must be entirely contained in $P_i$.  Furthermore, $\partial P_i$ cannot extend above $y'$, since this would mean $m'$ was a valid corner of $P_i$.
    
    There must therefore be another piece $P_j$ of the orthogonal world $W$ that intersects the interior of the right edge of $R$ from Definition~\ref{def:maximal_rectangle} for $R$. Let $y$ be the lowest point of intersection of $P_j$ and the right edge of $R$, then $y$ must be a $270^\circ$ corner (since a $90^\circ$ corner would mean that $\partial W$ enters $R$), but also cannot be the invalid corner of $P_i$, so $y$ must be valid, a contradiction.
    
    \item \underline{$m$ is a $270^\circ$ corner and $m'$ is a $180^\circ$ angle}. This case is proven identically to the previous, except $m$ is now on the right edge of $R$ and $m' \in R_3$. Then, $y$ is the rightmost intersection of $P_j$ and the bottom edge, hence $y$ is valid.
\end{enumerate}\end{proof}

This concludes the proof of Lemma~\ref{lem2}, showing there must exist a valid corner on the boundary of $R$.
\end{proof}

\begin{figure}[h]
\centering
\resizebox{1\linewidth}{!}{%
\begin{tikzpicture}[
  wall/.style={very thick},
  seg/.style={densely dashed, thick},
  free/.style={fill=black!8},
  bad/.style={red, very thick},
  ang/.style={blue, very thick},
  Sarc/.style={line width=3.5pt, orange!40, line cap=round, line join=round},
]
\begin{scope}[xshift=0cm]
  \fill[free] (0,0) rectangle (2.6,1.2);
  \draw[Sarc] (0,0) -- (2.6,0) -- (2.6,0.6);   
  \draw[wall] (0,0) -- (0,1.2);
  \draw[wall] (0,0) -- (1.3,0);
  \draw[seg]  (1.3,0) -- (2.6,0);
  \draw[seg]  (2.6,0.6) -- (2.6,0);
  \draw[bad]  ([yshift=-2.5pt]0,0) -- ([yshift=-2.5pt]1.3,0);
  \draw[ang]  (0,0.24) -- (0.24,0.24) -- (0.24,0.025);
  \node[orange, above] at (1.95,0.04) {\scriptsize $S$};
  \fill (0,0) circle (1.7pt) node[left]{\scriptsize $m$};
  \fill (2.6,0.6) circle (1.7pt) node[right]{\scriptsize $m'$};
  \node[blue] at (0.66,0.6) {\scriptsize $90^\circ$};
  \node[red] at (0.6,-0.28) {\scriptsize overlap};
  \node[align=center] at (1.3,-1.0) {(a) a $90^\circ$ corner forces\\[-1pt] an edge onto $S$};
  \node[gray] at (1.3,0.6) {$R$};
\end{scope}
\begin{scope}[xshift=4.6cm]
  \fill[free] (0,0) rectangle (2.6,1.2);
  \draw[Sarc] (0,0) -- (2.2,0);
  \draw[wall] (0,0) -- (0,-0.85);
  \draw[wall] (0,0) -- (1.2,0);
  \draw[wall] (1.2,0) -- (1.2,-0.85) -- (0,-0.85);
  \draw[bad]  ([yshift=2.5pt]0,0) -- ([yshift=2.5pt]1.2,0);
  \node at (0.6,-0.46) {\scriptsize hole};
  \draw[wall] (2.2,0) -- (2.2,-0.85) -- (3.2,-0.85) -- (3.2,0) -- (2.2,0);
  \node at (2.7,-0.46) {\scriptsize hole};
  \draw[seg]  (1.2,0) -- (2.2,0);
  \fill (0,0) circle (1.7pt) node[above left]{$m$};
  \fill (2.2,0) circle (1.7pt) node[above right]{$m'$};
  \node[red] at (0.6,0.28) {\scriptsize overlap};
  \node[align=center] at (1.3,-1.45)
       {(b) two invalid $270^\circ$ corners:\\[-1pt] $\partial W$ intersects the inside of $S$};
    
  \node[orange, above] at (1.7,0.04) {\scriptsize $S$};
  \node[gray] at (1.3,0.6) {$R$};
\end{scope}
\begin{scope}[xshift=9.6cm]
  \fill[free] (0,0) rectangle (2.6,1.2);
  \draw[Sarc] (0,0) -- (2.6,0);
  \draw[wall] (0,1.2) -- (0,-0.65);
  \draw[wall] (2.6,1.2) -- (2.6,-0.65);
  \draw[seg]  (0,0) -- (2.6,0);
  \draw[bad,->] (0.65,-0.05) -- (0.65,-0.6);
  \draw[bad,->] (1.3,-0.05)  -- (1.3,-0.6);
  \draw[bad,->] (1.95,-0.05) -- (1.95,-0.6);
  \draw[red,densely dashed] (0,-0.65) -- (2.6,-0.65);
  \fill (0,0) circle (1.7pt) node[above right]{$m$};
  \fill (2.6,0) circle (1.7pt) node[above left]{$m'$};
  \node[align=center] at (1.3,-1.05)
       {(c) two flat edges:\\[-1pt] $R$ expands, not maximal};
    
  \node[orange, above] at (1.3,0.04) {\scriptsize $S$};
  \node[gray] at (1.3,0.6) {$R$};
  
\end{scope}
\begin{scope}[xshift=14.2cm]
  \fill[free] (0,-0.7) rectangle (2.6,1.2);
  \draw[Sarc] (1.3,0) -- (2.55,0);
  \draw[seg] (0,0) -- (2.6,0);
  \draw[bad] (1.3,0) circle (0.45);
  \fill (1.3,0) circle (1.7pt);
  \node[below left] at (1.3,0) {$m$};
  \node[red] at (2.0,0.5) {\scriptsize $360^\circ$};
  \node[align=center] at (1.3,-1.1)
       {(d) a $360^\circ$ angle is in the\\ interior: $m\notin\partial W$};
  \node[orange, below] at (2.1,-0.04) {\scriptsize $S$};
  \node[gray] at (1.3,0.7) {$R$};
\end{scope}
\end{tikzpicture}
}
\caption{Impossible configurations for $S\subseteq R_4$ with endpoints $m$, $m'$ in Lemma~\ref{lem2}.}
\label{fig:case3-endpoints}
\end{figure}
\begin{figure}[h]
\centering
\resizebox{0.9\linewidth}{!}{%
\begin{tikzpicture}[scale=0.92, transform shape,
  Redge/.style={thick, gray!60},
  edge/.style={very thick},
  good/.style={teal, very thick},
  Sarc/.style={line width=3.5pt, orange!40, line cap=round, line join=round},
]
\begin{scope}[xshift=0cm, scale=0.9]
  \fill[black!6] (0,0) rectangle (4,2.6);
  \draw[Sarc] (0,0) -- (2.4,0);
  \draw[Redge] (0,2.6) -- (0,0) -- (4,0) -- (4,2.6);
  \draw[Redge] (0,2.6) -- (4,2.6);
  \node[gray] at (2,1.3) {\large $R$};
  \fill[black!12] (2.4,0) -- (5.2,0) -- (5.2,-1.0) -- (2.4,-1.0) -- cycle;
  \draw[edge] (2.4,-1.0) -- (2.4,0) -- (5.2,0);
  \node at (3.8,-0.55) {\large $P_i$};
  \draw[edge] (0,0) -- (0,2.6) -- (4,2.6);
  \fill (0,0) circle (2.2pt) node[above left=-1pt]{\large $m$};
  \fill (2.4,0) circle (1.8pt) node[above=2pt]{\large $m'$};
  \fill (4,0) circle (1.8pt);
  \node at (4.3,0.25) {\large $y'$};
  
  \fill (4,1.6) circle (2.2pt) node[below left =-1pt]{\large $y$};
  \fill[black!12] (4,1.6) -- (5.4,1.6) -- (5.4,2.3) -- (4,2.3) -- cycle;
  \draw[edge] (4,1.6) -- (5.4,1.6);
  \draw[edge] (4,1.6) -- (4,2.3);
  \node at (4.7,1.95) {\large $P_j$};
  \node[orange, below] at (1.2,-0.04) {\scriptsize $S$};
\end{scope}
\begin{scope}[xshift=7.0cm, yshift=-0.1cm]
  \draw[thick] (0.45,2.25) -- (0.45,1.8) -- (0.2,1.8) -- (0.2,2.35) -- (0,2.35) -- (0,0) -- (3.6,0) -- (3.6,2.6) -- (2.5,2.6) -- (2.5,2.25) --  cycle;
  \fill (0.9,1.8) -- (1.8,1.8) -- (1.8,0.45) -- (1.35,0.45) -- (1.35,1.15) -- (1.15,1.15) -- (1.15,1.35) -- (0.9,1.35) -- cycle;
  \fill (2.25,1.8) -- (3.15,1.8) -- (3.15,0.45) -- (2.25,0.45) -- (2.25,0.9) -- (2.45,0.9) -- (2.45,1.35) -- (2.25,1.35) -- cycle;
  \fill (0.5,0.4) -- (1.1,0.4) -- (1.1,0.9) -- (0.5,0.9);
  \draw[thick, blue!50] (0,1.8) -- (3.6,1.8);
  \node[blue!50] at (2,2) {\large $\ell$};
  \node[white] at (1.45,1.45) {\large $P_m$};
  \node[white] at (2.8,1.15) {\large $P_j$};
  \node at (1.8,-0.5) {\large $W$};
  \fill[gray!30!white] (3.87,1.485) -- (4.41,1.485) -- (4.41,1.62) -- (4.68,1.35) -- (4.41,1.08) -- (4.41,1.215) -- (3.87,1.215) -- cycle;
  \draw[thick] (5,1.8) -- (5,2.35) -- (5.2,2.35) -- (5.2,1.8) -- cycle;
  \draw[thick] (5.4,2.25) -- (5.4,1.8) -- (8.55,1.8) -- (8.55,2.6) -- (7.45,2.6) -- (7.45,2.25) -- cycle;
  \draw[thick] (6.75,1.7) -- (6.75,0.35) -- (6.3,0.35) -- (6.3,1.05) -- (6.1,1.05) -- (6.1,1.25) -- (5.85,1.25) -- (5.85,1.7) -- (4.95,1.7) -- (4.95,-0.1) -- (8.55,-0.1) -- (8.55,1.7) -- (8.1,1.7) -- (8.1,0.35) -- (7.2,0.35) -- (7.2,0.8) -- (7.4,0.8) -- (7.4,1.25) -- (7.2,1.25) -- (7.2,1.7) -- cycle;
  \fill (5.45,0.3) -- (6.05,0.3) -- (6.05,0.8) -- (5.45,0.8);
  \node at (6.95,2.55) {\large $W_1$};
  \node at (6.75,-0.5) {\large $W_2$};
  \node at (5.1,2.65) {\large $W_3$};
  \fill[red] (0.9,1.8) circle (1.8pt) node[below left =-1pt]{\Large $c$};
  \fill[orange] (2.25,1.8) circle (1.8pt) node[below left =-1pt]{\Large $c'$};
  \fill[purple] (3.15,1.8) circle (1.8pt) node[below right =-1pt]{\Large $d$};
\end{scope}
\node at (2.34,-1.55) {\normalsize (a)};
\node at (11.3,-1.55) {\normalsize (b)};
\end{tikzpicture}%
}
\caption{(a) Case~3 showing why a valid $270^\circ$ corner $y$ on $\partial R$ must exist.\\ (b) Theorem~2: splitting $W$ along $\ell$.}
\label{fig:case3-and-split}
\end{figure}

\begin{theorem}
    \label{main2}
        Let $W$ be an orthogonal world such that $C_W \ne \emptyset$. Then, $G(C_W)$ is connected.
\end{theorem}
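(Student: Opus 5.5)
We argue by induction on the number of holes $h$. The figure (Figure~\ref{fig:case3-and-split}b) suggests the intended reduction: split $W$ along a horizontal line $\ell$ through the top edge of a hole, obtaining smaller orthogonal worlds whose valid corners are closely related to those of $W$.

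\textbf{Base case $h=0$.} Here $C_W$ is exactly the set of reflex corners of the simple orthogonal polygon $P_0$. Connectivity of $G(C_W)$ then follows from Remark~\ref{vg}, since $W$ is non-convex when $C_W\neq\emptyset$. If we prefer not to rely on the remark, a self-contained argument is available. Take two valid corners $c,c'$ and a path between them in $W$. Cover $W$ by maximal rectangles. By Lemma~\ref{lem2}, each maximal rectangle $R$ has a valid corner on $\partial R$, and any two points of $\partial R$ see each other through the convex set $R$. We then chain overlapping maximal rectangles along the path. The step that must be verified is that when two maximal rectangles overlap, their respective valid corners are linked in $G(C_W)$. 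To handle this, replace the overlap by a third maximal rectangle containing a point of the intersection and extended in a chosen direction so that it shares boundary corners with both.

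\textbf{Inductive step.} Pick the hole $P_m$ whose top edge is highest (ties broken by leftmost). Let $c$ be its top-left (invalid) corner and $\ell$ the horizontal line through its top edge. Extend the top edge of $P_m$ to the left and right along $\ell$ until it first meets $\partial W$ at points $a,b$. Since no hole lies above $\ell$ that could block the extension, the segment $\overline{ab}$ cuts $W$ into pieces. The part above it is a world $W_1$ with fewer holes; the part below, with $P_m$ merged into the outer boundary, is a world $W_2$ with $h-1$ holes. Possibly a further part $W_3$ arises where the cut meets other boundary. Each piece is an orthogonal world with fewer holes, so by induction each $G(C_{W_k})$ is connected.

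The main work is the bookkeeping that relates the valid corners of each $W_k$ to $C_W$. Cutting creates new corners at $a$ and $b$, which are $90^\circ$ angles in the pieces or $180^\circ$ points and so are harmless. The corners of $P_m$ on $\ell$ become $90^\circ$ or $270^\circ$ corners of the pieces. Some valid corners of $W$ (e.g. $c'$ and $d$ on $\ell$) are now on the boundary of a piece. Conversely, a corner that was invalid for some hole of $W$ may become a valid corner of a piece once that hole is absorbed into the outer boundary. We must show that every valid corner of a piece is either in $C_W$ or can be bypassed in the visibility graph. We also need $F_{W_k}\subseteq F_W$ restricted to $W_k$, which holds because $W_k\subseteq W$. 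Finally, the graphs $G(C_{W_k})$ must be glued. Valid corners lying on $\overline{ab}$, such as $c'$ and $d$, are common to adjacent pieces and see each other along $\ell$. This gives a bridging edge, provided such a corner exists; otherwise Lemma~\ref{lem2} applied to a thin maximal rectangle straddling $\ell$ supplies one.

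The hard step is controlling the invalid corner $c$. It is excluded from $C_W$, yet it may be the natural articulation point joining $W_1$ and $W_2$. We must show that its neighbours are still mutually connected, for example via the corner $c'$ or the top-right corner of $P_m$, which lies on $\ell$ and is valid. The choice of the top-left corner as the invalid one (rather than an arbitrary corner) is exactly what guarantees that the other top corner of $P_m$ stays valid and sees everything $c$ sees along $\ell$. This is the argument we would make precise first.
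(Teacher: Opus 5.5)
Your skeleton matches the paper's: induction on $h$, Remark~\ref{vg} for $h=0$, a horizontal cut at the height of the highest hole top, induction on the pieces, and gluing along $\ell$. However, the steps that make the induction work are either left open or argued incorrectly.

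\textbf{The gluing.} You claim that valid corners of $W$ on the cut, such as $c'$ and $d$, are ``common to adjacent pieces'' and therefore bridge the graphs $G(C_{W_k})$. They are not vertices of any $G(C_{W_k})$. Each piece lies locally on one side of $\ell$, so a point of $\ell$ is at most a $180^\circ$ angle in a piece. For example, the top-right corner of $P_m$ is $180^\circ$ in the piece above and $90^\circ$ in the piece below. For the same reason the corners of $P_m$ never become $270^\circ$ corners of a piece, contrary to what you write. Consequently the induction hypothesis says nothing about these points, and the edge $c'd$ along $\ell$ does not attach them to any $G(C_{W_k})$. Lemma~\ref{lem2} on a rectangle straddling $\ell$ does not help either, since that rectangle is not inside a single piece.

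The missing idea is to apply Theorem~\ref{thm1} to each piece. If $d\in C_W\cap\ell\cap W_i$ and $C_{W_i}\neq\emptyset$, then $d\in W_i=F_{W_i}(C_{W_i})$, so some valid corner of $W_i$ sees $d$ inside $W_i\subseteq W$. Two further facts complete the gluing. First, $C:=C_W\cap\ell$ is a clique, since $\ell\subseteq W$. Second, every piece has such a $d$ on its border: if the $270^\circ$ corner of $W$ found on $\ell\cap W_i$ is the invalid top-left corner of a hole, use that hole's top-right corner on $\ell$ instead.

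\textbf{The bookkeeping.} What you call ``the main work'' is never carried out, and your worry points the wrong way. What is needed is that a point off $\ell$ is valid in $W_i$ if and only if it is valid in $W$. This holds because a hole owning an invalid corner off $\ell$ does not meet $\ell$, so it is unaffected by the cut. Together with the remark above, this gives $C_W=C_{W_1}\cup\cdots\cup C_{W_k}\cup C$. No invalid corner of $W$ ever becomes valid in a piece: the only absorbed holes are those whose top edge lies on $\ell$, and their top-left corners become $90^\circ$.

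Your ``hard step'' about the invalid corner $c$ then disappears. The paper cuts through the \emph{valid} top corner of the topmost hole, which also guarantees $C\neq\emptyset$, and the invalid corner, lying outside $C_W$, plays no role.

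\textbf{The cut itself.} Stopping the extension where it ``first meets $\partial W$'' can fail. A hole whose top edge also lies on $\ell$ is not ``above $\ell$'', yet it can stop the extension. Then $\overline{ab}$ joins two holes and touches $\partial P_0$ at most once, so it does not disconnect $W$ and removes no hole. You should take $\ell$ to be the maximal horizontal segment in $W$ through the corner. That segment runs along such top edges and ends on $\partial P_0$.
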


\newpage

\begin{proof}[Proof of Theorem~\ref{main2}]
    We proceed by induction on the number $h$ of holes of $W$.

    \noindent \textbf{Base case:} If $h=0$, there is no invalid corner in $W$ since there is no hole, so $C_W$ is exactly the set of all $270^\circ$ angles. So, from Remark~\ref{vg}, $G(C_W)$ is connected.

    \noindent \textbf{Induction step:} If $h > 0$, then let $P_m$ be the hole of $W$ whose valid corner $c$ has the highest vertical coordinate, then lowest horizontal coordinate in case of equality. Consider now the horizontal line segment $\ell \subseteq W$ of maximal length such that $\ell$ contains $c$. $W\setminus\ell$ thus consists of $k \ge 2$ distinct pairwise-disconnected shapes $W'_1,W'_2,\ldots,W'_k$ (notice that $k$ is at least $2$ since the endpoints of $\ell$ must both be on vertical edges of $P_0$, disconnecting $W$ into at least two sets). Then, each $W_i = \mathrm{cl}(W'_i)$ is an orthogonal world with strictly less than $h$ holes, since $P_m$ is not a hole of any $W_i$. An example is seen in Figure~\ref{fig:case3-and-split}b where $k=3$.

    For any point $c' \in W_i \setminus \ell$, we claim that $c'$ is valid in $W_i$ if and only if $c'$ is valid in $W$. To see this, we first note that $c'$ has the same angle in $W_i$ and $W$, hence we may assume that this angle is $270^\circ$. On one hand, if $c'$ is the invalid corner of some hole $P_j$ in $W$, then $P_j \cap \ell = \emptyset$ since $c' \notin \ell$, so $P_j$ is also a hole of $W_i$, hence $c'$ is invalid in $W_i$. On the other hand, if $c'$ is the invalid corner of a hole $P_j$ in $W_i$, then $P_j$ is also a hole in $W$, so $c'$ is invalid in $W$, proving our claim. Letting $C = C_W \cap \ell$, this gives us that $C_W = C_{W_1} \cup \ldots \cup C_{W_k} \cup C$.
    
    Finally, our induction hypothesis gives us that each non-empty $G(C_{W_i})$ is connected. Since these are all subgraphs of $G(C_W)$ (an edge in $G(C_{W_i})$ is also an edge in $G(C_W)$) and $C$ is connected ($\ell$ is a segment therefore $C$ is a clique), then it only remains to show that there is an edge from $C$ to all of the non-empty $C_{W_i}$'s, implying that $G(C_W)$ is connected. 
    
    Fix any $W_i$; there must be a point $c' \in C \cap W_i$ that is a $270^\circ$ corner in $W$, since otherwise $\ell$ could not have disconnected $W_i'$ from the others. Furthermore, if $c'$ is the invalid corner of some hole $P_j$ in $W$, then the $270^\circ$ corner $d$ on the right of $c'$ in $P_j$ is valid in $W$ and is on the border of $W_i$ (see Figure~\ref{fig:case3-and-split}b, where $c'$ and $d$ are in both $W_1$ and $W_2$). In all cases, for each $W_i$, there is a valid corner $d \in C \cap W_i$. Using Theorem \ref{thm1}, if $C_{W_i} \ne \emptyset$, then $d$ is seen by some valid corner of $W_i$, meaning there is an edge from $C_{W_i}$ to $C$ in $G(C_W)$, finishing the proof.
\end{proof}

\section{Empirical Evaluation}
\label{sec:experiments}
We now evaluate five selection rules for $x$ (see line 4 of Alg.~\ref{cadence_algo}) as distinct members of the CADENCE subclass of WCA. Results are then compared to the Incremental Self-Deployment algorithm (ISDA)~\cite{howard2002incremental}, the baseline.

\textbf{Setup.} We use randomly generated discrete orthogonal worlds (grid worlds) at three sizes (as in~\cite{meriaux2025mobile}),
$50\times50$, $100\times100$, and $250\times250$, each with holes and a fixed deployment point $x_d$.
In total, $300$ orthogonal worlds were generated, each repeated with $5$ seeds (each world being run 5 times by each algorithm). Each run is capped by an agent budget
$N_{\max}=\tfrac{n}{2}+h-2$ (the worst-case CADENCE bound) and a step budget $T_{\max}$, set to
$5{,}000$, $10{,}000$, and $30{,}000$ (as in~\cite{meriaux2025mobile}) selection steps for the three resolutions respectively; a run
that reaches either cap before full coverage is recorded as non-converged.

\textbf{Process.} To simulate real-life MANETs, we run CADENCE by deploying agents from $x_d$, they walk one square at a time (step by step) towards their allocated valid corner, and the total number of steps they take is denoted "Steps" (see Table~\ref{tab:results_combined}). Throughout this process, we call an agent \textit{de-allocated}~\cite{meriaux2025mobile} if its removal would not affect coverage or connectivity. Then, we may re-allocate it in subsequent iterations instead of deploying a new agent. Once $W$ is finally covered (i.e. $F(\mathcal{A}) = W$), the nb. of agents in $W$ is denoted "Max", and this number minus the nb. of de-allocated agents is denoted "Final" (see Table~\ref{tab:results_combined}).

\textbf{Selection rules.} \emph{rand\_point} picks a uniformly random visible valid corner. \emph{max\_dist} picks the visible valid corner that maximizes the minimal distance to the deployment point and to any de-allocated agent, while \emph{min\_dist} minimizes that minimal distance. \emph{most\_edge} picks the visible valid corner that sees the most frontier cells (the visible cells with a non-visible neighbor), while \emph{least\_edge} picks the one that sees the least. On the other hand, ISDA~\cite{howard2002incremental} could deploy agents anywhere in the current field of view, not only at corners. There are two variants: \emph{ISDA edge} randomly selects a frontier cell, while \emph{ISDA any} randomly selects any visible cell.

\begin{table}[h]
\centering
\caption{Mean final nb. of agents (Final), maximal nb. of agents (Max), and total nb. of steps (Steps) over random orthogonal worlds (lower is better; best per column in bold).
\na\, indicating less than 80\% of the runs finished within the step budget for $100^2$, and 0\% of the runs finished in under 24 hours for $250^2$.}
\label{tab:results_combined}
\setlength{\tabcolsep}{3pt}

\resizebox{1\columnwidth}{!}{%
\begin{tabular}{lcccccccccccc}
\hline
 & \multicolumn{6}{c}{$50^2$} & \multicolumn{6}{c}{$100^2$} \\
\cline{2-7}\cline{8-13}
Method & Final & $\sigma$ & Max & $\sigma$ & Steps & $\sigma$ & Final & $\sigma$ & Max & $\sigma$ & Steps & $\sigma$ \\
\hline
\multicolumn{13}{l}{\textit{CADENCE}} \\
\hline
least\_edge & $13.4$ & $3.1$ & $20.0$ & $3.1$ & $1218$ & $337$ & $26.5$ & $3.5$ & $38.9$ & $4.1$ & $6168$ & $1037$ \\
rand\_point & $13.1$ & $3.2$ & $22.2$ & $3.0$ & $1488$ & $412$ & $25.7$ & $3.6$ & $42.3$ & $4.3$ & $8320$ & $1345$ \\
max\_dist & $13.0$ & $3.1$ & $21.9$ & $3.9$ & $1202$ & $411$ & $25.6$ & $3.5$ & $41.1$ & $5.0$ & $6133$ & $1658$ \\
most\_edge & $\mathbf{13.0}$ & $3.1$ & $21.1$ & $3.3$ & $1558$ & $467$ & $\mathbf{25.3}$ & $3.5$ & $40.3$ & $3.9$ & $8609$ & $1582$ \\
min\_dist & $13.6$ & $3.2$ & $\mathbf{16.2}$ & $3.3$ & $705$ & $262$ & $26.8$ & $3.6$ & $\mathbf{30.4}$ & $3.8$ & $2757$ & $705$ \\
\hline
\multicolumn{13}{l}{\textit{ISDA baseline}} \\
\hline
ISDA edge & $16.1$ & $3.3$ & $16.3$ & $3.3$ & $\mathbf{573}$ & $247$ & $33.4$ & $4.0$ & $33.6$ & $4.1$ & $\mathbf{2547}$ & $800$ \\
ISDA any & $13.5$ & $3.0$ & $30.2$ & $7.8$ & $6120$ & $3945$ & \na & \na & \na & \na & \na & \na \\
\hline
\end{tabular}%
}

\vspace{1em} 

\begin{tabular}{lcccccc}
\hline
 & \multicolumn{6}{c}{$250^2$} \\
\cline{2-7}
Method & Final & $\sigma$ & Max & $\sigma$ & Steps & $\sigma$ \\
\hline
\multicolumn{7}{l}{\textit{CADENCE}} \\
\hline
least\_edge & $47.4$ & $6.4$ & $76.8$ & $8.6$ & $36003$ & $5295$ \\
rand\_point & $44.6$ & $6.3$ & $74.2$ & $6.9$ & $51434$ & $6576$ \\
max\_dist & $44.9$ & $6.2$ & $71.1$ & $8.1$ & $32104$ & $6790$ \\
most\_edge & $\mathbf{43.8}$ & $6.2$ & $69.7$ & $7.1$ & $52060$ & $6792$ \\
min\_dist & $47.6$ & $6.2$ & $\mathbf{53.5}$ & $6.0$ & $\mathbf{11514}$ & $2141$ \\
\hline
\multicolumn{7}{l}{\textit{ISDA baseline}} \\
\hline
ISDA edge & $62.9$ & $6.8$ & $63.1$ & $6.8$ & $11561$ & $2691$ \\
ISDA any & \na & \na & \na & \na & \na & \na \\
\hline
\end{tabular}
\end{table}
\setlength{\textfloatsep}{5pt}
\setlength{\intextsep}{5pt}


\textbf{Results.} The CADENCE variants trade off along minimizing the aforementioned metrics Final, Max, and Steps, and no single selection rule wins on all of them. See Table~\ref{tab:results_combined} for full details of the results.

First, \emph{ISDA any} does worse than \emph{least\_edge} on every metric, and \emph{least\_edge} is never the best performing rule on any metric. This is what we expected, since these rules do not pick points according to any minimizing strategy.

Next, \emph{min\_dist} and \emph{ISDA edge} have similar performances on Steps and Max and are the two best selection rules for these two metrics, but \emph{min\_dist} overtakes \emph{ISDA edge} on every metric as the world size grows. Indeed, for $50^2$, $100^2$, and $250^2$ sized worlds, \emph{min\_dist} performs better for Final by $15.5\%$, $19.8\%$, and $24.3\%$ and for Max by $0.6\%$, $9.5\%$, and $15.2\%$, respectively. Meanwhile, \emph{ISDA edge} performs better on Steps by $18.7\%$, $7.6\%$, and $-0.4\%$, respectively, meaning that \emph{min\_dist} eventually overtakes \emph{ISDA edge} on every metric for bigger worlds. This might suggest that deploying agents at the corners becomes increasingly valuable as the worlds' sizes grow.

Finally, \emph{most\_edge}, \emph{max\_dist}, and \emph{rand\_point} have very similar performances on every metric, and are the best at minimizing Final. These results were surprising to us, since such rules tend to try expanding the field of view of the agents as fast as possible, therefore leaving less room for de-allocation. However, we observed that it is only towards the very end of the process that these rules start de-allocating agents, explaining why they do worse on the Max metric since they accumulate more agents overall, but then do better on the Final metric since they have lots of agents to de-allocate.

Overall, it is clear that the CADENCE selection rules perform better than the ISDA rules, except for the Steps metric on smaller sized worlds.

\section{Conclusion}
\label{sec:conclusion}
This paper studies the WCA that solve the POCGAGP. We give a full proof that CADENCE, a subclass of WCA, achieves full coverage of orthogonal worlds with at most $\tfrac{n}{2}+h-2$ agents despite the partial observability constraint, matching the best known bound for the fully observable setting. Our empirical evaluation of deployment-order heuristics shows that the strongest CADENCE variants outperform the ISDA baseline on every metric as world sizes grow. 
%
\bibliographystyle{splncs04}
\bibliography{references}

\end{document}